\newcommand{\cmark}{\ding{51}}%
\newcommand{\xmark}{\ding{55}}%
\DeclareRobustCommand\onedot{\futurelet\@let@token\@onedot}
\def\@onedot{\ifx\@let@token.\else.\null\fi\xspace}
\def\eg{\emph{e.g}\onedot} 
\def\ie{\emph{i.e}\onedot}
\def\wrt{w.r.t\onedot} 
 \def\wolog{w.l.o.g\onedot}
\def\etal{\emph{et al}\onedot}
\renewcommand{\phi}{\varphi}
\renewcommand{\epsilon}{\varepsilon}
\renewcommand{\underline}{\ul}
\providecommand{\norm}[1]{\ensuremath{\left\lVert#1\right\rVert}}
  \providecommand{\R}{\mathbb{R}} 
  \DeclareMathOperator*{\argmin}{arg\,min}
  \providecommand{\0}{\mathbf{0}}
  \providecommand{\ii}{\mathbf{i}}
  \providecommand{\pp}{\mathbf{p}}
  \renewcommand{\ss}{\mathbf{s}}
  \providecommand{\vv}{\mathbf{v}}
  \providecommand{\xx}{\mathbf{x}}
  \providecommand{\mC}{\mathbf{C}}
  \providecommand{\mJ}{\mathbf{J}}
  \providecommand{\mK}{\mathbf{K}}
  \providecommand{\mM}{\mathbf{M}}
  \providecommand{\mS}{\mathbf{S}}
  \providecommand{\cE}{\mathcal{E}}
  \providecommand{\cF}{\mathcal{F}}
  \providecommand{\cL}{\mathcal{L}}
  \providecommand{\cM}{\mathcal{M}}
  \providecommand{\cP}{\mathcal{P}}
  \providecommand{\cS}{\mathcal{S}}
  \providecommand{\cT}{\mathcal{T}}
  \providecommand{\cV}{\mathcal{V}}
\newtheorem{proposition}{Proposition}
\newtheorem{assumption}{Assumption}
\crefname{assumption}{Assumption}{Assumptions}
\crefname{example}{Example}{Examples}
\crefname{claim}{Claim}{Claims}
\newacronym{gplvm}{GP-LVM}{Gaussian Process Latent Variable Model}
\newacronym{sft}{SfT}{Shape from Template}
\newacronym[firstplural=Gaussian Processes (GPs)]{gp}{GP}{Gaussian Process}
\newacronym{desurt}{DeSurT}{Deformable Surface Tracking}
\newacronym{tso}{TSO}{Tracking Surface with Occlusion}
\newacronym{tds}{TDS}{Texture-less Deformable Surfaces}
\newacronym{relu}{ReLU}{Rectified Linear Unit}
\newcommand\blfootnote[1]{%
  \begingroup
  \renewcommand\thefootnote{}\footnote{#1}%
  \addtocounter{footnote}{-1}%
  \endgroup
}
\begin{document}

\title{Deformable Surface Reconstruction via Riemannian Metric Preservation}

\author{Oriol Barbany$^\dagger$\hspace{3ex}Adrià Colomé\hspace{3ex}Carme Torras \\[1ex]
Institut de Robòtica i Informàtica Industrial (CSIC-UPC), Barcelona, Spain
}
\maketitle
\blfootnote{$\dagger$ Corresponding author: \texttt{obarbany@iri.upc.edu}.}
\begin{figure*}
    \centering
    \includegraphics[width=.9\linewidth]{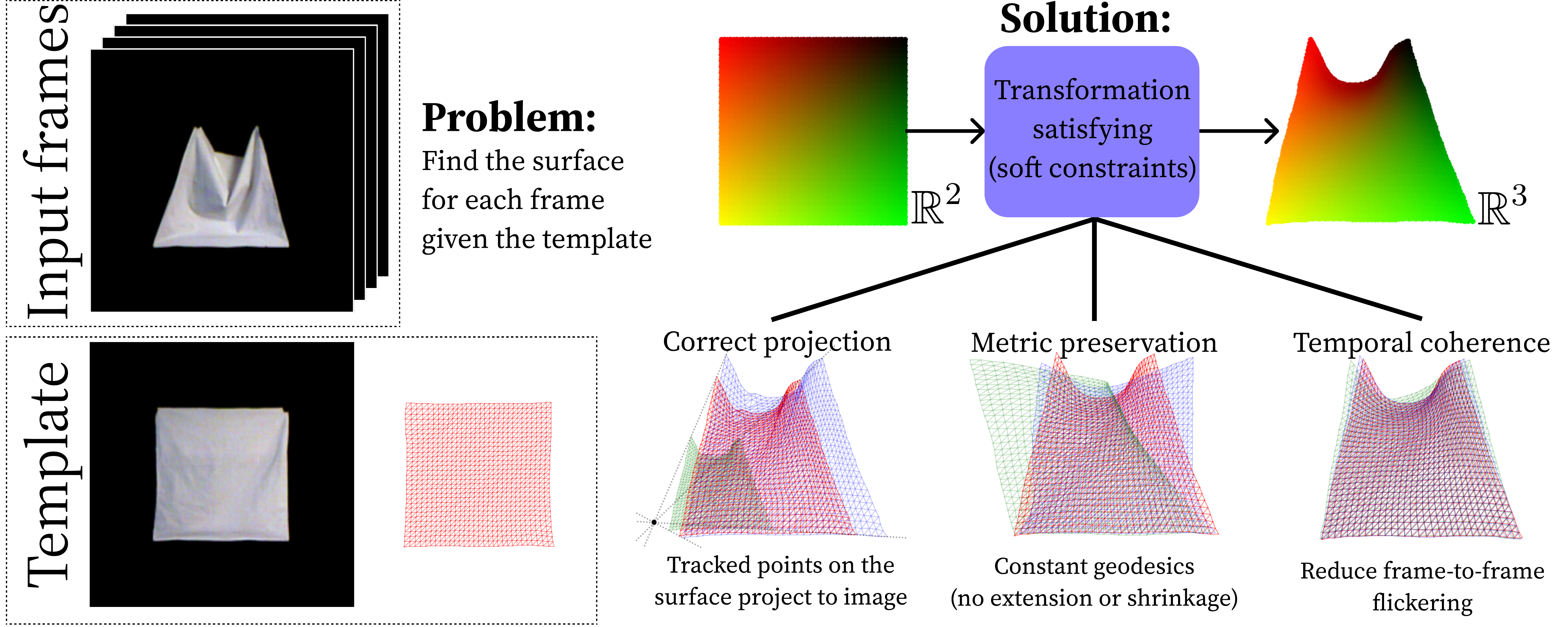}
    \caption{\textbf{Summary of the presented method.} Given a template consisting of a surface representation (in this case, a mesh) and an image showing such a surface, we want to recover deformations of the same surface as it appears in the input frames. We propose describing the surface by a parametric equation learned by an iterative process. A parametric equation maps a point in $\R^2$, uniquely defining a point on the surface, to its spatial coordinates in $\R^3$ (in the example shown above, this mapping preserves the colors, which we add for easier visualization). To find such a parametric equation, we impose three soft constraints that alone are insufficient but, when combined, allow us to recover the surface. We illustrate the ambiguities of each condition by showing feasible solutions that are either correct (in red) or incorrect (in blue and green). The constraints and ambiguities are: (1) The surface correctly projects to the input image, a condition satisfied by any surface obtained after arbitrarily moving the tracked points along the line of sight. (2) The geodesic distances are preserved, a condition satisfied for any isometric transformation of the template surface. (3) The surface does not vary too much in consecutive frames, which is compatible with all surfaces whose distance between the surface from the previous frames is small.}
    \label{fig:graphical_abstract}
\end{figure*}

\begin{abstract}
    Estimating the pose of an object from a monocular image is an inverse problem fundamental in computer vision.
    The ill-posed nature of this problem requires incorporating deformation priors to solve it. In practice, many materials do not perceptibly shrink or extend when manipulated, constituting a powerful and well-known prior. Mathematically, this translates to the preservation of the Riemannian metric.
    Neural networks offer the perfect playground to solve the surface reconstruction problem as they can approximate surfaces with arbitrary precision and allow the computation of differential geometry quantities. This paper presents an approach to inferring continuous deformable surfaces from a sequence of images,
    which is benchmarked against several techniques and obtains state-of-the-art performance without the need for offline training.
\end{abstract}


\section{Introduction}
\label{sec:intro}

In this paper, we tackle the problem of inferring the shape of a generic non-rigid object from a sequence of monocular images given a 3D template of the manipulated object. This problem is known as \gls*{sft} \cite{shape_from_template} and appears in many areas like entertainment \cite{animation_application}, medicine \cite{medical_application}, and robotic manipulation \cite{robotic_manipulation_application}. Our motivation stems from this last field, in which perceiving the state of deformable objects is one of the bottlenecks often pinpointed as hindering their manipulation by robots \cite{sanchez}. In this area, estimating the cloth state is crucial to simulate its evolution under manipulation \cite{inextensible} and apply model-predictive control \cite{luque} to guide the robots.

The \gls*{sft} problem is inherently ill-posed as it permits infinitely many solutions leading to accurate projection to the input 2D images \cite{TretschkNonRigidSurvey}
and requires the incorporation of deformation priors. One of the most common hypotheses is that the object is obtained from an isometric transformation of the template \cite{shape_from_template,analytical_sol_proofs,Parashar_2015_ICCV,chhatkuli17,brunet2010,reconstruct_sharply_folding,isowarp}, meaning that it cannot extend or shrink. This condition is often not restrictive in practice as many deformable objects made of materials such as cloth and paper are nearly inextensible \cite{linear_local_model}.
Although this constitutes a powerful and widely applicable prior, it defines a differential equation that, in most cases, has to be approximated to make the problem computationally tractable.

In this work, we propose to encode the parametric equations of the object of interest in the weights of a neural network. Parametric equations map 2D coordinates representing a point on a surface to its 3D coordinates. Therefore, they provide a continuous surface contrasting with the discrete representations used in previous works. To find the parameters of the parametric equations, we perform an iterative procedure enforcing three soft constraints accounting for a correct surface projection, the preservation of the surface metric (equivalent to the isometry constraint), and temporal consistency with the previous surface. We depict the presented method in \cref{fig:graphical_abstract}.

The isometry assumption is equivalent to the hypothesis that the geodesics on the surface may not change their length over time, which we can enforce by preserving the Riemannian metric. Given the parametric equations, we can analytically compute differential geometric quantities such as the metric tensor. Unlike previous methods struggling to optimize non-convex objectives and considering differential equations, using a neural network and stochastic optimizers allows for estimating the parameters of the surface in a relatively small amount of time. While there exist methods using explicit neural representations that have been used for single-view reconstruction \cite{atlasnet,parametric_surface_metric}, this is the first method to use them without offline training and in conjunction with the popular isometry constraint.

We evaluate the proposed method against five approaches and a baseline introduced in this paper on the publicly available datasets \gls*{desurt} and \gls*{tds}. The quantitative results show that our method achieves the lowest or the second-lowest mean tracking error and standard deviation in all the tested sequences, showing its effectiveness and robustness. Compared to other optimization-based approaches considered in this work, our solution is the fastest, which confirms that one can analytically enforce isometry and keep the computation feasible using the proposed framework.

Our main contributions are:
\begin{itemize}
	\item We combine the powerful and widely applicable assumption of metric preservation with the representation power of neural networks. In particular, we learn a surface parametrization \cite{atlasnet}, that allows \textbf{representing continuous surfaces} rather than discretized representations (\eg, point clouds, voxels, or meshes).
	\item We advocate for imposing \textbf{metric preservation of the surface as a soft constraint}, which accounts for the fact that physical quantities are not exactly preserved \cite{alet2021noether,tailoring,closed_form_solution}.
        \item We learn the parameters of the neural explicit surface during inference and \textbf{without an offline training process}. Therefore, the method requires neither a dataset nor fine-tuning to new sequences, materials, or template representations.
\end{itemize}

The rest of this paper is structured as follows. In \Cref{sec:related_work}, we review previous approaches to the \gls*{sft} problem. Then, we introduce the proposed method in \Cref{sec:method} and show its qualitative and quantitative performance in \Cref{sec:experiments}. Finally, we conclude the paper in \Cref{sec:conclusions}.

\section{Related Work}
\label{sec:related_work}

In this section, we review several approaches to the \gls*{sft} problem and group them under three umbrellas: the methods that use hand-crafted constraints to define the manifold of surfaces, those that infer the deformation models from data, and the approaches that combine analytical and data-driven models. \cref{tab:related_work} presents a summary of the discussed papers.

\begin{table*}[t]
    \resizebox{%
      \ifdim\width>\linewidth
        \linewidth
      \else
        \width
      \fi
    }{!}{%
    \centering
    \rowcolors{2}{gray!10}{}
    \begin{tabular}{cccccc}
        \toprule
        Method & \makecell{Works with\\non-planar\\template} & \makecell{Models\\sharp\\folds} & \makecell{Does not\\need\\dataset} & \makecell{Temporal\\coherence} & \makecell{Variable\\mesh\\resolution}\\
        \hline
        \multicolumn{6}{>{\cellcolor[gray]{.8}}l}{\textbf{Hand-crafted constraints (\cref{sec:handcrafted})}} \\
        \hline
        Iterative isometric surfaces \cite{brunet2010} & \xmark & \cmark & \cmark & \xmark & \xmark \\
        Closed-form isometric surfaces (I) \cite{shape_from_template,analytical_sol_proofs} & \cmark & \xmark & \cmark & \xmark & \xmark \\
        Closed-form isometric surfaces (II) \cite{Parashar_2015_ICCV,chhatkuli17,isowarp} & \cmark & \cmark & \cmark & \xmark & \xmark \\
        Iterative constant Euclid. \cite{surface_deformation_models} & \xmark & \xmark & \xmark & \cmark & \xmark \\
        Closed-form constant Euclid. \cite{closed_form_solution} & \xmark & \xmark & \xmark & \xmark & \xmark \\
        Inextensible surfaces \cite{reconstruct_sharply_folding} & \xmark & \cmark & \cmark & \xmark & \xmark \\
        Dense registration  \cite{dense_image_registration} & \xmark & \cmark & \cmark & \cmark & \xmark \\
        Laplacian meshes \cite{laplacian_mesh,surface_track_graph_matching} & \cmark & \cmark & \cmark & \cmark & \xmark \\
        Edge orientation changes \cite{convex_optimization} & \cmark & \cmark & \cmark & \cmark & \xmark \\
        Vertex coordinate changes \cite{temporal_coherence} & \cmark & \cmark & \cmark & \cmark & \xmark \\
        \hline
        \multicolumn{6}{>{\cellcolor[gray]{.8}}l}{\textbf{Data-based constraints (\cref{sec:databased})}} \\
        \hline
        Latent space \cite{linear_local_model,local_deformation_model,constrained_lvm} & \cmark & \cmark & \xmark & \xmark & \xmark \\
        Neural network: Image to vertices/depth \cite{texture-less,deepsft,texture_generic_deepsft,pumarola2018geometry} & \cmark & \cmark & \xmark & \xmark & \xmark \\
        Surface parametrization \cite{atlasnet,parametric_surface_metric} & \cmark & \cmark & \xmark & \xmark & \cmark \\
        \hline
        \multicolumn{6}{>{\cellcolor[gray]{.8}}l}{\textbf{Hybrid approaches (\cref{sec:hybrid})}} \\
        \hline
        GP constant Euclid. \cite{implicitly_constrained_gp} & \xmark & \xmark & \xmark & \xmark & \xmark \\
        Differentiable physics simulator and renderer \cite{phi_sft} & \cmark & \cmark & \cmark & \cmark & \xmark \\
        \midrule
        Ours & \cmark & \cmark & \cmark & \cmark & \cmark \\
        \bottomrule
    \end{tabular}
    }
    \caption{\textbf{Summary of related work}. Our method is the first to satisfy all the listed desirable properties for the reconstruction of deformable surfaces.}
    \label{tab:related_work}
\end{table*}

\subsection{Hand-crafted constraints}
\label{sec:handcrafted}
These methods leverage explicit properties of the tracked surface to determine the manifold of possible shapes. Given that constraints are hand-crafted, we can easily modify them, and their effects are well understood. However, some constraints rarely describe the complex and non-linear physics that real surfaces exhibit for large deformations \cite{closed_form_solution}, which would require designing complex objectives that need specific knowledge of the surface \cite{reconstruct_sharply_folding,linear_local_model}.

We can reconstruct a surface unambiguously if its Riemannian metric is assumed to be preserved \cite[Theorem 1]{analytical_sol_proofs}. Brunet \etal \cite{brunet2010} exploited this property proposing a method that requires planar templates. Several works \cite{shape_from_template,analytical_sol_proofs,Parashar_2015_ICCV,chhatkuli17,isowarp} enforced metric preservation by relying on a differentiable warp between the template and the images in a sequence. Such warp is undefined in occluded areas, and its required differentiability is incompatible with sharp folds.

Given the difficulty of enforcing surface metric preservation, some works proposed to use relaxations using the Euclidean norm between vertices. Enforcing equality constraints \cite{closed_form_solution,surface_deformation_models} is incompatible with sharp folds, and inequality constraints \cite{reconstruct_sharply_folding,linear_local_model,local_deformation_model} are prone to vertex collapsing. To prevent the latter, some works use the maximum depth heuristic, which does not consider surface properties \cite{analytical_sol_proofs}.

Laplacian meshes provide a framework for constraining the problem by reducing the number of free parameters \cite{laplacian_mesh,surface_track_graph_matching}. However, this approach loses resolution on the edges of the surface and yields 3D shape estimates that re-project to the image but are not necessarily accurate.

Another typical assumption is that the surface does not vary too much in neighboring frames, which is a mild assumption for high enough image rates and is known as the short-baseline case in the \gls*{sft} literature. Enforcing temporal smoothness helps recover surfaces with severe deformations and reduces jitter. We can reduce frame-to-frame flickering by minimizing the difference with the previous time step \cite{temporal_coherence}, a window of past time steps \cite{reconstruct_sharply_folding}, the second derivative of the surface parameters \cite{surface_deformation_models}, or the change of edge orientation along time \cite{convex_optimization}. Other works leverage temporal smoothness to obtain good initializations to the solution \cite{dense_image_registration,laplacian_mesh,phi_sft}.

\subsection{Data-based constraints}
\label{sec:databased}

Data-based approaches learn the manifold of plausible surfaces using statistical learning techniques. The drawback of this class of methods is that they require a dataset of surface configurations, which ideally should contain all the possible deformations and be representative enough of the dynamics of the material. Listing all deformations is impossible for deformable objects like a cloth, which we can arrange in infinitely many ways. An additional limitation is that, for these methods, we may need a different dataset for each type of material, surface, lighting conditions, and mesh resolution.

We can obtain the manifold of possible shapes using dimensionality reduction techniques. Previous works have considered PCA \cite{linear_local_model,closed_form_solution,surface_deformation_models}, sparse \glspl*{gplvm} \cite{local_deformation_model} and constrained latent variable models \cite{constrained_lvm}. The dimensionality of the latent space may depend on the material \cite{surface_deformation_models}, and the models can yield extensible surfaces even if the dataset only consists of inextensible surfaces \cite{surface_deformation_models}.

An increasingly popular approach is to use neural networks to either predict the surface vertices from an image \cite{texture-less,pumarola2018geometry} or predict the depth map and use it to infer the coordinates of the points in the tracked surface \cite{deepsft,texture_generic_deepsft}. In general, such methods require learning on large datasets, and in some cases, they only allow prohibitively small mesh sizes, \eg, $10\times10$ \cite{pumarola2018geometry}. The methods relying on depth estimation only recover the visible points in an image and require
post-processing the resulting point cloud with As-Rigid-As-Possible regularization \cite{arap}.

An interesting approach is to learn the parametric equations of the surface depicted in the input image \cite{atlasnet,parametric_surface_metric}, which can generate continuous surfaces. These approaches are trained on pairs of images and their point clouds and hence are also object-specific.

\subsection{Hybrid approaches}
\label{sec:hybrid}

A promising research direction highlighted in the context of perception for robotic cloth manipulation \cite{review_kragic}  is to combine analytical and data-driven models. Salzmann \etal \cite{implicitly_constrained_gp} used \glspl*{gp} implicitly satisfying a set of quadratic equality constraints, which are overly simplistic for sharply folding materials like clothes \cite{constrained_lvm}. This method also requires that all the training examples satisfy all the constraints.

Kairanda \etal \cite{phi_sft} proposed integrating a differentiable physics simulator and renderer. The drawbacks of this method are that it imposes hard constraints through the physics simulator, requires 16-24 hours on a GPU to process an image sequence, and uses a fixed resolution of around 300 vertices, which prevents capturing fine wrinkles \cite{phi_sft}.

This work falls into this category, as we also propose leveraging simulation equations devised to constrain the dynamics of meshes and use them to solve the inverse problem. Concretely, we adapt the inextensible model for the manipulation of textiles \cite{inextensible}, which follows the isometry assumption.

\section{Methodology}
\label{sec:method}

In this section, we first introduce the problem notation and some preliminaries. Then, we integrate the relaxation of the isometry constraint developed for a cloth simulator \cite{inextensible} into a classical optimization scheme. This method is used as a baseline and dubbed as \textsc{Classical}. Finally, we introduce our method.

\subsection{Preliminaries}

The first fundamental form of a surface $\cS$ allows measuring lengths of curves, angles of tangent vectors, and areas of regions on it \cite{differential_geometry_book}. Using a parametrization $\phi: \cP \subset \R^2 \to \R^3$, the Riemannian metric of $\cS$ is then uniquely defined by the metric tensor $\mJ_{\phi}^ T\mJ_{\phi}$, where $\mJ_{\phi}$ is the Jacobian matrix of the map $\phi$. We focus on modeling surfaces in such a way that we preserve the Riemannian metric. That is, at all times, the length of any curve inside the surface remains constant.

\begin{assumption}
	The sequence of monocular images used as input to the \gls*{sft} problem is obtained with a calibrated camera with known intrinsic parameters.
	\label{ass:camera}
\end{assumption}

Following the pinhole camera model, given a point $\ss = (x, y, z) \in \cS$, which \wolog we assume to be expressed in the camera referential, we can compute the position $(u, v)$ in the image captured by the camera as follows:
\begin{equation}
	d
	\begin{bmatrix}
		u \\
		v \\
		1 \\
	\end{bmatrix} =
	\mK \begin{bmatrix}
		x \\
		y \\
		z
	\end{bmatrix}\,,
	\label{eq:mesh2img}
\end{equation}
where $\mK$ is the intrinsic matrix, known according to \cref{ass:camera}, a common assumption in \gls*{sft}, and $d$ is the depth along the line of sight. In the following, let $\Pi(\ss):=(u, v)$.

\textbf{Problem: }Given a sequence of images $\{I^{(t)}\}_{t \in [T]}$, where $[T]:=\{1, \dots, T\}$, and a template surface $\cT$, estimate the surface $\cS^{(t)}$ at any time $t$.

\begin{figure}[t]
	\centering
	\includegraphics[width=\columnwidth]{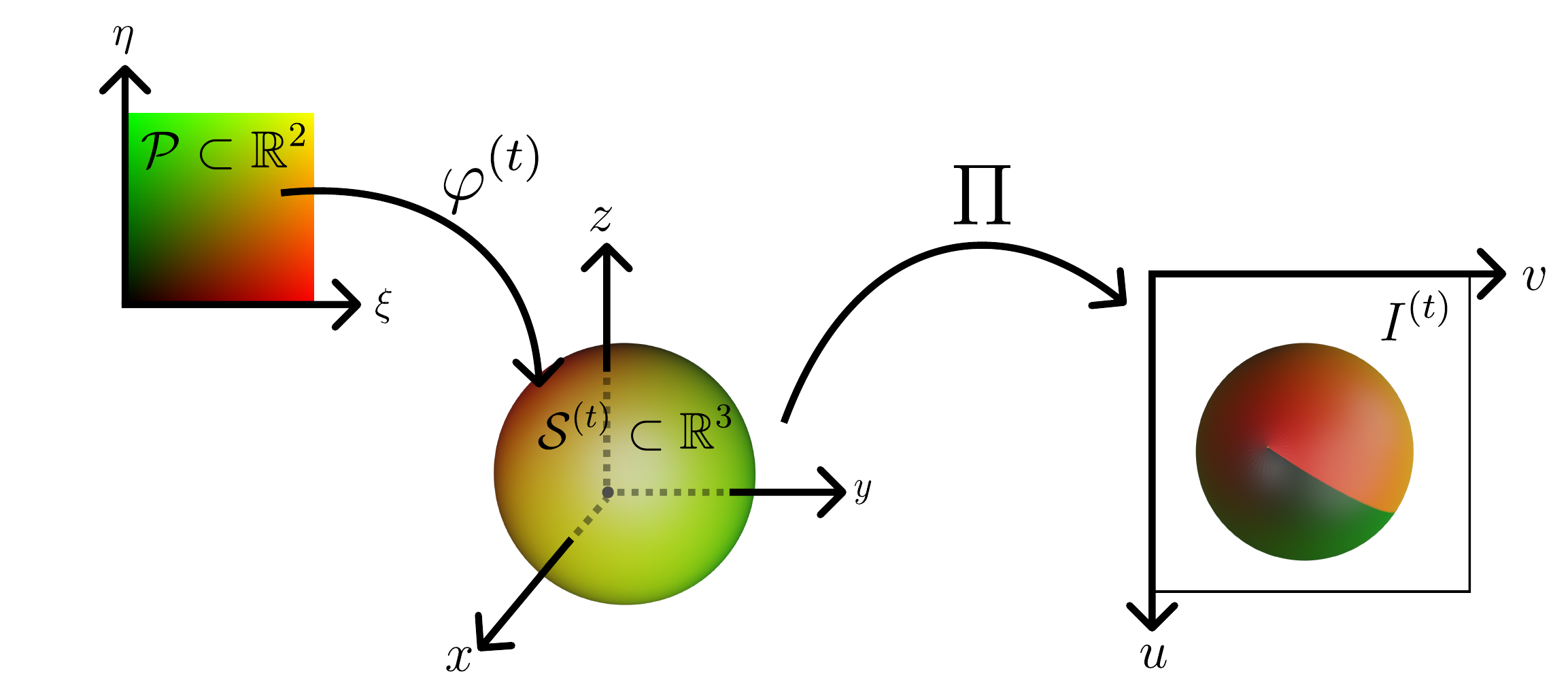}
	\caption{\textbf{Problem notation.} Scheme illustrating the notation used in this manuscript. The parametric equations $\phi^{(t)}$ map the input domain $\cP\in \R^2$, which in this case is the interior of a unit square, to the 3D coordinates of $\cS^{(t)}$, the surface at time $t$. Then, an image $I^{(t)}$ is obtained by projecting the surface using $\Pi$. We add colors to the input of the parametrization $\cP$ and preserve them after each transformation.}
	\label{fig:notation}
\end{figure}

\subsection{Classical approach}
\label{sec:classical}

Let $\cM_{\cS}:=(\cV,\cE,\cF)$ be a triangle mesh that approximates the smooth surface $\cS$ with $n$ vertices $\cV:=\{\vv_i\}_{i\in[n_v]}$, edges $\cE\subseteq \cV^2$ and triangular faces $\cF \subseteq \cV^{3}$.

A typical strategy to formulate the \gls*{sft} problem is to proceed in two steps. First, in the registration step, we find correspondences between a known shape and an image showing an unknown deformation with a feature-matching algorithm providing
\begin{align}
	M^{(t)} := \{(\ss, \ii) : \ss \in \cS^{(t)}, \ii = \Pi(\ss) \in I^{(t)}\}\,.
	\label{eq:matchings}
\end{align}

Then, in the reconstruction step, we obtain the deformed shape, which requires inferring the depth of the image points obtained in registration.

Salzmann \etal \cite{ambiguities} showed that the projection constraints given by the registration algorithm can be expressed in a linear system of equations of the form
\begin{align}
	\mM^{(t)}\xx^{(t)} = \0\,,
	\label{eq:Mx0}
\end{align}
where $\mM^{(t)}\in \R^{2|M^{(t)}| \times 3 n_v}$ expresses the correspondences and $\xx^{(t)}:=\text{vec}\left(\begin{bmatrix}\vv_1^{(t)} & \cdots & \vv_N^{(t)}\end{bmatrix}\right)$.

To compute $\mM^{(t)}$, the tracked points on the surface are expressed using barycentric coordinates. Recall that, given a point $\pp$ belonging to facet $f$ of $\cM_{\mS}$, we have that $\pp=\sum_{i\in[3]}b_i\vv_{f,i}$, where $\sum_{i\in [3]}b_i=1$ and $\{b_i\}_{i\in [3]}$ are the barycentric coordinates of $\pp$.

Every solution to \cref{eq:Mx0} re-projects correctly to the image, but the vertex positions are not guaranteed to correspond with the true ones because of the depth ambiguity. Given enough correspondences, the constraints given by the registration step are enough and do not require knowing the depth \cite{ambiguities}. However, the imperfection of image correspondences in real scenarios introduces ambiguities, and the lack of identifiable texture on the surface limits the number of possible matches among images. Overall, \cref{eq:Mx0} is severely under-constrained in practice, with around a third of the singular values of $\mM^{(t)}$ being very close to zero \cite{ambiguities}. That means there are several possible solutions, in this case, obtained by moving each point along the line of sight. To factor out these incorrect solutions, we require additional constraints.

Coltraro \etal \cite{inextensible} used the Riemannian metric preservation assumption to constrain the possible vertices of a mesh for the simulation of deformable surfaces. In particular, they introduced an easy-to-evaluate function $C$ that, given a parametrization $\phi^{(t)}$ of $\cM_{\cS}^{(t)}$, satisfies
\begin{align}
	\mJ_{\phi^{(t)}}^ T\mJ_{\phi^{(t)}}=\mJ_{\phi^{\text{temp.}}}^ T\mJ_{\phi^{\text{temp.}}} \Longleftrightarrow C(\xx^{(t)})=\0\,,
\end{align}
where $\phi^{\text{temp.}}$ is the parametrization of the template shape. Given the realistic simulations achieved by Coltraro \etal \cite{inextensible}, we consider the assumption to be reasonable.

As done in some works \cite{laplacian_mesh,convex_optimization}, we can relax \cref{eq:Mx0} and minimize the error in the square sense subject to the metric preservation given by $C$. That is, for $\Delta \xx^{(t)}:=\xx^{(t)}-\xx^{(t-1)}$, the vertices of the mesh can be found solving
\begin{align}
	\begin{aligned}
		\min_{\Delta \xx^{(t)}}& \norm{\mM^{(t)}(\xx^{(t-1)}+\Delta \xx^{(t)})}^2_2 \\
		\text{s.t.}&\mC(\xx^{(t-1)}+\Delta \xx^{(t)})=\0
	\end{aligned}\,,
	\label{eq:hard_problem}
\end{align}
which is a quadratic program with quadratic constraints. As done by Coltraro \etal \cite{inextensible}, to make the problem computationally tractable, we approximate \cref{eq:hard_problem} with a sequence of quadratic programs with linear constraints using the first order Taylor expansion $\mC(\xx^{(t)})\approx \mC(\xx^{(t-1)})+\nabla \mC(\xx^{(t-1)}) \Delta\xx^{(t)}$ \cite{inextensible,constrained_lvm}.

Overall, for each time step $t$, the vertex positions of \textsc{Classical} are found by solving the linear system
\begin{align}
	\begin{cases}
		{\mM^{(t)}}^T\mM^{(t)}(\xx^{(t-1)}+\Delta \xx^{(t)})+\nabla \mC(\xx^{(t-1)})^T \boldsymbol{\lambda} =\0 \\
		\mC(\xx^{(t-1)}) + \nabla \mC(\xx^{(t-1)}) \Delta\xx^{(t)} = \0
	\end{cases}\,,
	\label{eq:linear_system_relaxed}
\end{align}
where $\boldsymbol{\lambda}$ is the vector of Lagrange multipliers. We recall that this loss function is not minimized during an offline training process but instead during inference for each of the inputs.

\subsection{Proposed method}
\label{sec:our_method}

\begin{algorithm*}[t]
	\caption{\textsc{Deformable Surface Reconstruction}}
	\label{alg:algo}
	\begin{algorithmic}[1]
		\STATE \textbf{Inputs: }Template $\cT$, matchings $\{M_{\cP}^{(t)}\}_{t\in [T]}$
		\STATE \textbf{Output: }Estimated surfaces described by $\{\phi_{\theta^{(t)}}\}_{t\in [T]}$
		\STATE Compute $P_{\cV}$ according to $\cT$
		\STATE $\displaystyle\theta^{\text{temp.}}\leftarrow \argmin_{\theta^{\text{temp.}}} \frac{1}{|P_{\cV}|} \sum_{\pp_i\in P_{\cV}} \norm{\phi_{\theta^{\text{temp.}}}(\pp_i) - \vv_i}_2$ \COMMENT{Over-fit to template}
		\STATE Store $\mJ_{\phi_{\theta^{\text{temp.}}}}(\pp)^ T\mJ_{\phi_{\theta^{\text{temp.}}}}(\pp)$ $\forall \pp \in P_{\cV}$ \COMMENT{Metric tensors of the template}
		\STATE Let $\phi_{\theta^{(0)}}\leftarrow\phi_{\theta^{\text{temp.}}}$ \COMMENT{Needed for \cref{eq:temp}}
		\FOR{$t\in [T]$}
		\STATE $\displaystyle\theta^{(t)}\leftarrow \argmin_{\theta^{(t)}} \cL_{\text{total}}$ \COMMENT{Compute loss \cref{eq:loss} with stored metric tensors}
		\ENDFOR
	\end{algorithmic}
\end{algorithm*}

Representing the state of a deformable object alone is an open challenge \cite{review_kragic,MONTAGNAT20011023}. Most \gls*{sft} methods represent objects using meshes, but an alternative is to use the explicit neural representation introduced by Groueix \etal \cite{atlasnet}. Such a representation encodes a surface in the weights of a neural network representing a mapping from 2D to 3D. Parametric representations are a general way to express a surface, and their great flexibility allows to control deformations with intuitive parameters \cite{review}. Given a parametric surface, we can analytically compute the first and second fundamental forms, Gaussian curvature, and surface normals \cite{parametric_surface_metric}.

Implicit neural representations are another method that has recently emerged as a promising alternative to classical discretized representations of signals \cite{inr_dictionaries2022}. Both explicit and implicit neural representations can generate continuous surfaces, which amounts to having meshes with an arbitrary resolution. However, an advantage of using explicit representations in front of the popular implicit representations such as NeRFS \cite{nerf} or SDFs \cite{sdf} is that the co-domain of an explicit representation is the surface itself. Therefore, if we want to generate a mesh, it is enough to sample the domain at desired locations of the vertices. Instead, implicit representations require using marching cubes on the outputs obtained by sampling many more points on $\mathbb{R}^3$ in the case of SDFs (and the sampled points will only lie on the surface if the output is 0), and sampling rays for NeRFs (with rays that may or may not hit the object).

Given that both considered datasets only consist of rectangular surfaces, we choose the parametrization $\cP$ to be $\cP:=[0,1]$ for practical purposes. Other approaches \cite{parametric_surface_metric,atlasnet} set $\cP$ as the interior of the unit square. However, to naturally represent the surface edges, we consider the closure of such a domain. The choice of $\cP$ to be the unit square allows representing all developable surfaces, \ie, smooth surfaces that we can flatten into a plane, \eg, cylinders, cones, and toruses.

To model non-developable surfaces like the sphere, we can choose the interior of the unit square. In case of having surfaces with other topologies or with holes, we could obtain $\cP$ by conformal flattening of $\cT$ \cite{shape_from_template}, using a texture map \cite{analytical_sol_proofs}, inferring the parametrization domain from data \cite{pix2surf_2020}, or combining different parametric surfaces to create an atlas \cite{atlasnet}.

A usual assumption in the \gls*{sft} problem is that the template $\cT$ and the tracked surface at time $t$ $\cS^{(t)}$ have the same topology, which implies that they also share a parametrization space \cite{shape_from_template}. Therefore, we can use the same choice of $\cP$ to infer the surface at any point in the sequence.

\begin{proposition}
    Let $\cS$ be a surface that can be parametrized on the unit square. There exists a feed-forward neural network with Softplus non-linearities that can approximate $\cS$ with arbitrary precision.
	\label{prop:rep_power}
\end{proposition}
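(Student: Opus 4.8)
The claim is a universal-approximation statement, so the plan is to reduce it to a standard density theorem. First I would fix what ``approximate $\cS$ with arbitrary precision'' should mean: since $\cS$ is parametrized on the unit square, there is a continuous map $\phi \colon [0,1]^2 \to \R^3$ whose image is $\cS$, and I would measure approximation by the uniform distance $\sup_{\pp \in [0,1]^2} \norm{\phi_\theta(\pp) - \phi(\pp)}$ between the network map $\phi_\theta$ and $\phi$. A uniform bound of $\epsilon$ on this quantity immediately gives that the Hausdorff distance between the network-generated surface and $\cS$ is at most $\epsilon$, so uniform approximation of the parametrization is the right target. The two structural facts I would record here are that the domain $[0,1]^2$ is compact and that $\phi$ is continuous (which is exactly what ``parametrizable on the unit square'' guarantees), so that $\phi$ belongs to $C([0,1]^2;\R^3)$.

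Next I would invoke the universal approximation theorem in the form that allows a general activation, namely the result of Leshno \etal (and Pinkus) stating that single-hidden-layer feed-forward networks with a fixed activation $\sigma$ are dense in $C(\cP)$ for any compact $\cP$ precisely when $\sigma$ is continuous and not a polynomial. The only thing to verify is that the Softplus activation $\sigma(x) = \log(1 + e^x)$ meets these hypotheses: it is smooth, hence continuous, and it is clearly non-polynomial, for instance because it is bounded below by $0$ yet unbounded above and strictly convex, so it cannot agree with any polynomial. Therefore the theorem applies verbatim to our setting.

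Finally I would approximate each of the three scalar coordinate functions $\phi_1, \phi_2, \phi_3 \colon [0,1]^2 \to \R$ to uniform accuracy $\epsilon/\sqrt{3}$ by scalar Softplus networks, and stack them into a single network with a common input layer and three output units; combining the three bounds yields the desired uniform accuracy $\epsilon$ for the vector-valued map, and hence for the surface itself.

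The step I expect to be the only real subtlety is the first one, pinning down the notion of ``approximating the surface'' and confirming that uniform approximation of the parametrization controls it, together with checking that Softplus satisfies the non-polynomiality condition of the general approximation theorem; the remaining assembly is routine. An alternative route, should one prefer to avoid the general theorem, is to note that a scaled Softplus $\tfrac{1}{\beta}\log(1+e^{\beta x})$ converges uniformly to the \gls*{relu} as $\beta \to \infty$, reducing the statement to the well-known universality of \gls*{relu} networks; but the direct appeal to the non-polynomial universal approximation theorem is cleaner.
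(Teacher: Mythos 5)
Your argument is correct and follows the same overall template as the paper's proof: reduce the claim to uniform approximation of the continuous parametrization $\phi\colon[0,1]^2\to\R^3$ on a compact domain, then invoke a universal approximation theorem whose hypotheses the Softplus activation satisfies. The one substantive difference is \emph{which} density theorem is invoked. You appeal to the classical Leshno--Lin--Pinkus--Schocken result (single hidden layer, arbitrary width, density in $C(\cP)$ for any continuous non-polynomial activation) and explicitly verify that Softplus is non-polynomial, handling the vector-valued output by stacking three scalar coordinate networks. The paper instead mirrors Proposition~2 of Groueix \etal \cite{atlasnet} verbatim and replaces Hornik's theorem with the Kidger--Lyons theorem \cite{universal_approximator} on deep, bounded-width networks, which also covers Softplus. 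Both routes are valid; yours is the more elementary and self-contained (it spells out the approximation metric, the Hausdorff-distance consequence, and the non-polynomiality check that the paper leaves implicit), whereas the paper's choice of a depth-based theorem matches more directly the multi-hidden-layer architecture actually used in its experiments. Your closing observation, that $\tfrac{1}{\beta}\log(1+e^{\beta x})$ converges uniformly to the ReLU and so the statement could also be reduced to ReLU universality, is a third viable route that neither you nor the paper needs.
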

\begin{proof}
	The proof follows the same reasoning as in Groueix \etal \cite[Proposition 2.]{atlasnet}, which states the same for ReLU non-linearities. Therefore, instead of relying on the universal representation theorem by Hornik \cite{universal_approximator_relu}, this proposition requires invoking the theorem by Kidger \etal \cite{universal_approximator}, which works with other activation functions, including the Softplus.
\end{proof}

Assuming that we can parametrize the surface of interest with the chosen $\cP$, we can leverage \cref{prop:rep_power} and represent it with a feed-forward neural network $\phi_{\theta}$, where $\theta$ are the learnable parameters. Note that the parametrization needs to be twice differentiable so that the loss may incorporate first-order derivatives \cite{parametric_surface_metric}. The requirement motivates the use of Softplus \cite{softplus}, an approximation of the ReLU function with smooth first and second derivatives \cite{parametric_surface_metric}.

Similarly to Brunet \etal \cite{brunet2010}, we obtain the surface parameters by minimizing a combination of a re-projection loss \cref{eq:proj}, a term involving the metric tensor favoring plausible poses \cref{eq:prior}, and a term that encourages smooth motions \cref{eq:temp}. The key differences with this work are:
\begin{itemize}
	\item \textbf{Re-projection loss: }For a matching $(\ss,\ii)\in M^{(t)}$, Brunet \etal \cite{brunet2010} enforce that $\ss \approx \Pi^{-1}(\ii)$. This requires knowing the depth, which is included as an optimized parameter. Instead, we check that $\Pi(\ss)\approx \ii$.
	\item \textbf{Metric preservation loss: }Our method enforces that the metric of the surface at any time is close to that of the template. In contrast, Brunet \etal \cite{brunet2010} sets the metric tensor to the identity, which only allows modeling unit squares on $\R^3$ when $\cP=[0,1]$.
	\item \textbf{Smoothing loss: }Brunet \etal \cite{brunet2010} favor non-bending surfaces by minimizing the Frobenius norm of the Hessian of the parametrization. Instead, we consider the difference of surfaces in consecutive frames, which aims at reducing frame-to-frame flickering.
\end{itemize}

For the sake of notation, let $\phi_{\star}$ be the real parametrization of $\cS$. Let $P_{\cV}$ be the set of points from $\cP$ corresponding to the vertices of the template mesh and the surface meshes for each time. That is
\begin{align}
	P_{\cV}:=\{\pp \in \cP: \pp=\phi_{\star}^{-1}(\vv) \ \forall \vv \in \cV \}\,.
	\label{eq:control_points}
\end{align}

Similarly to the relaxation of \cref{eq:matchings} used in \cref{sec:classical}, re-projection consistency can be enforced with
\begin{align}
	\cL_{\text{projection}}:=\frac{1}{|M^{(t)}_\cP|}\sum_{(\pp, \ii) \in M^{(t)}_\cP}\norm{\Pi(\phi_{\theta^{(t)}}(\pp))-\ii}_2\,,
	\label{eq:proj}
\end{align}
where
\begin{align}
	M_{\cP}^{(t)} := \{(\pp, \ii) : \pp \in \cP, \ii = \Pi(\phi_{\star}^{(t)}(\pp)) \in I^{(t)}\}\,.
	\label{eq:matchings_p}
\end{align}

This term enforces that the recovered surfaces are consistent with their corresponding images. This loss is present in all the \gls*{sft} solutions, either explicitly or implicitly in some neural network-based approaches, since the monocular images provide the only information to recover the current state of the surfaces.

Suppose the matches provide image locations for each vertex. One could displace each 3D vertex location along the line of sight, thus obtaining infinitely many surfaces that attain a zero re-projection loss. Enforcing isometry ideally reduces the set of plausible solutions to a single surface \cite[Theorem 1]{analytical_sol_proofs}.

To favor surfaces whose Riemannian metric is preserved, we add the loss term
\begin{align}
	\cL_{\text{metric}}:=\frac{1}{|P_{\cV}|}\sum_{\pp \in P_{\cV}}\left\lVert\right.&\mJ_{\phi_{\theta^{(t)}}}(\pp)^ T\mJ_{\phi_{\theta^{(t)}}}(\pp)-\\
	&\left.\mJ_{\phi_{\theta^{\text{temp.}}}}(\pp)^ T\mJ_{\phi_{\theta^{\text{temp.}}}}(\pp)\right\rVert_F^2\,,
	\label{eq:prior}
\end{align}
where $\norm{\cdot}_F$ is the Frobenius norm.

Note that unlike works approximating the surface metric, we can compute it analytically using surface parametrization \cite{parametric_surface_metric}. Moreover, the preservation of the metric is assessed in $P_{\cV}$, not only in the visible parts, which potentially helps to recover occluded zones \cite{brunet2010}.

Another difference with previous works is that we incorporate isometry as a soft constraint, which differs from works imposing the metric to be exactly preserved \cite{analytical_sol_proofs,shape_from_template}, which may be a restrictive assumption in real scenarios. Quasi-isometry, on the other hand, is a relatively mild constraint when manipulating surfaces like clothes, as those are nearly inextensible \cite{linear_local_model}. This approximation is especially suited for robotics contexts, where ﬁne details such as wrinkles are not needed \cite{inextensible}.

Finally, the loss also includes a temporal regularization term
\begin{align}
	\cL_{\text{time}}:=\frac{1}{|P_{\cV}|}\sum_{\pp \in P_{\cV}}\norm{\phi_{\theta^{(t)}}(\pp)-\phi_{\theta^{(t-1)}}(\pp)}_2\,.
	\label{eq:temp}
\end{align}

Adding a small amount of temporal regularization reduces frame-to-frame flickering \cite{temporal_coherence}. Additionally, point correspondences and metric preservation are not sufficient to uniquely recover the correct surface \cite{brunet2010}. In particular, surface corners can freely bend as long as they do not shrink or extend if there are no point correspondences \cite{brunet2010}.

The total loss used to update the parameters $\theta^{(t)}$ then becomes
\begin{align}
    \cL_{\text{total}}:=\cL_{\text{projection}}+\lambda_{\text{metric}}\cL_{\text{metric}}+\lambda_{\text{time}}\cL_{\text{time}}\,.
	\label{eq:loss}
\end{align}

In \cref{alg:algo}, we detail the procedure on how to estimate the surfaces for the whole sequence given the inputs of the \gls*{sft} problem.

\section{Experiments}
\label{sec:experiments}

\begin{table*}[t]
	\centering
	\rowcolors{2}{gray!10}{}
	\resizebox{%
      \ifdim\width>\linewidth
        \linewidth
      \else
        \width
      \fi
    }{!}{%
	\begin{tabular}{ccccccccc}
		\toprule
		\multicolumn{2}{c}{\textsc{Dataset} $\downarrow$ / \textsc{Methods}$\rightarrow$} &\textsc{GP} & \textsc{Lap} & \textsc{Dense} & \textsc{Graph} & \textsc{Texless} & \textsc{Classical} & \textsc{Ours}\\
		\toprule
		& Brick & 109.45 & \textbf{44.80} & 81.56 & 87.03 & 84.99\textsuperscript{$\dagger$} & 67.25 & \underline{48.32} \\
		& Campus & 85.48 & 85.80 & \underline{57.00} & 76.66 & 155.16 & 66.00 & \textbf{40.18}\\
		& Cloth & 104.30 & 460.44  & \underline{85.58} &   95.08    & 150.53 & 891.65 & \textbf{54.49}\\
		& Cobble &100.54  &\textbf{41.90}  &68.90   &66.23   &88.94 & 56.33 & \underline{53.05} \\
		& Cushion I & 104.17      &\textbf{72.55}   &108.65  &124.23  &100.22\textsuperscript{$\dagger$} & 252.36 & \underline{88.97} \\
		& Cushion II& 103.95  &157.76  & \underline{96.91}   &147.95  &226.35\textsuperscript{$\ddagger$} & 918.70 & \textbf{71.23} \\
		& Newspaper I &83.25 & \underline{63.23}   &85.12   &97.55   &130.26\textsuperscript{$\dagger$} & 79.61 & \textbf{43.23}\\
		& Newspaper II &79.88  & \underline{67.66}   &73.65   &87.33   &186.14 & 68.39 & \textbf{46.37}\\
		& Scene &102.25 &\textbf{57.36}   &82.37   &70.74   &136.24 & 69.85 & \underline{60.14} \\
		& Stone &100.62 &421.36  & \underline{90.93}   &91.28   &126.62\textsuperscript{$\dagger$} & 1140.05 & \textbf{72.03} \\
		\multirow{-11}{*}{\rotatebox[origin=c]{90}{\gls*{desurt}}} & Sunset &107.89 & 248.29\textsuperscript{$\ast$} & \underline{67.24}  &84.43   &138.52\textsuperscript{$\ddagger$} & 72.97 & \textbf{58.77} \\
		\midrule
		& Lr\_bottom\_edge & 0.10   & 0.88\textsuperscript{$\ast$}    &0.09    &1.09    &0.09\textsuperscript{$\dagger$} & \underline{0.07} & \textbf{0.06}\\
		& Lr\_bottom\_edge\_tl\_corn  & 0.06    & 3.00\textsuperscript{$\ast$}    &0.07    & 1.08      &0.06\textsuperscript{$\dagger$} & \textbf{0.04} & \underline{0.05} \\
		& Lr\_left\_edge & 0.08    & 1.60\textsuperscript{$\ast$}    &0.10    & 1.05      & 0.08\textsuperscript{$\dagger$} & \underline{0.06} & \textbf{0.05} \\
		& Lr\_tl\_tr\_corns   & 0.07  & 0.85\textsuperscript{$\ast$}    &0.09    & 1.00      &0.06\textsuperscript{$\dagger$} & \underline{0.06} & \textbf{0.05}\\
		& Lr\_top\_edge\_1 & 0.09    & 0.51\textsuperscript{$\ast$}    &0.09    & 0.96      & \underline{0.08}\textsuperscript{$\dagger$} & \textbf{0.07} & \textbf{0.07}\\
		& Lr\_top\_edge\_2 & 0.08    & 0.60\textsuperscript{$\ast$}    &0.09    & 0.97     & \underline{0.07}\textsuperscript{$\dagger$} & \textbf{0.06} & \underline{0.07}\\
		\multirow{-7}{*}{\rotatebox[origin=c]{90}{TDS}} & Lr\_top\_edge\_3 & \underline{0.06}    & 0.36\textsuperscript{$\ast$}    &0.07    & 1.02      &0.07\textsuperscript{$\ddagger$} & \textbf{0.05} & \underline{0.06}\\
		\bottomrule
	\end{tabular}
	}%
	\caption{\textbf{Quantitative results}. This table shows the mean tracking error (mm) obtained when reconstructing a mesh from monocular images. The \textbf{best value} for each sequence (the lowest) is shown in boldface and the \underline{runner-up} is underlined. Asterisks ($\ast$) indicate that a method did not terminate, in which case the average error of the meshes obtained before the method crashed is reported. The sequences used to train and validate \textsc{Texless} are indicated with a dagger ($\dagger$) and a double dagger ($\ddagger$), respectively. Note that we favor the algorithm by providing it with full sequences and in some cases reporting the values on training examples.}
	\label{tab:quantitative}
\end{table*}
\begin{table}[t]
    \resizebox{%
      \ifdim\width>\columnwidth
        \columnwidth
      \else
        \width
      \fi
    }{!}{%
	\centering
	\rowcolors{2}{gray!10}{}
	\begin{tabular}{llccc}
		\toprule
		\multicolumn{2}{l}{\textsc{Methods}}&\textsc{Time }(s) & \textsc{\gls*{desurt}} & \textsc{\gls*{tds}}\\
		\toprule
		& \textsc{GP} &  $\mathbf{0.01}$ & $98.34\pm 10.37$ & $0.07 \pm \mathbf{0.01}$ \\
		\multirow{-2}{*}{Learning-based}& \textsc{Texless} & $\mathbf{0.01}$ & $138.54 \pm 41.72$ & $0.07 \pm \mathbf{0.01}$ \\
		\midrule
		& \textsc{Lap} &  $5.80$ & $156.47 \pm 153.31$ & $1.11 \pm 0.92$ \\
		& \textsc{Dense} & $26.73$ & $81.63 \pm \mathbf{14.55}$ & $0.09 \pm \mathbf{0.01}$ \\
		& \textsc{Graph} & $32.52$ &  $93.50 \pm 23.75$ & $1.02 \pm 0.05$ \\
		& \textsc{Classical} & $49.00$ & $334.83 \pm 424.56$ & $\mathbf{0.06 \pm 0.01}$ \\
		\multirow{-5}{*}{Optimization-based} & \textsc{Ours} & $\mathbf{0.95}$ &  $\mathbf{57.80} \pm 14.72$ & $\mathbf{0.06 \pm 0.01}$ \\
		\bottomrule
	\end{tabular}
        }
	\caption{\textbf{Additional quantitative performance indicators.} The column \textsc{Time} shows the average the number of seconds required to process one image for the \gls*{tds} sequence Lr\_bottom\_edge, the dataset with higher resolution meshes  (concretely, 961 vertices). The columns \textsc{\gls*{desurt}} and \textsc{\gls*{tds}} present the mean and standard deviation of the tracking error in millimeters across the sequences of each dataset. The \textbf{best value} for each sequence (the lowest for all metrics) is shown in boldface.}
	\label{tab:additional_quantitative}
\end{table}

\subsection{Datasets}
\label{sec:datasets}

To quantitatively evaluate the proposed method, we require ground-truth mesh vertex positions. In this paper, we use two public datasets involving rectangular deformable surfaces:

\begin{itemize}
	\item \textbf{\gls*{desurt} \cite{surface_track_graph_matching}:} Dataset consisting of 11 video streams, with around 300 images each, displaying different types of materials, deformations, and lighting conditions. The surfaces are either well-textured (Campus, Cobble, Cushion I, Scene, Newspaper I, and Newspaper II), repetitively textured (Brick, Cloth, and Cushion II), or weakly textured (Stone and Sunset). The \gls*{desurt} dataset represents surfaces with meshes of size $13\times10$ vertices for all the sequences but the cushion, which uses an $11\times11$ mesh.

	\item \textbf{\gls*{tds} \cite{texture-less}:} Dataset showing deformable surfaces under various lighting conditions. We use all the sequences with ground-truth vertex annotations, which account for seven image sequences with around 900 images each, displaying a piece of cloth represented with a $31\times31$ mesh.
\end{itemize}

\begin{figure*}
    \centering
    \begin{subfigure}{.24\textwidth}
      \centering
      \includegraphics[width=\textwidth]{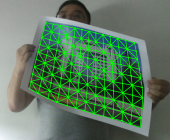}
      \caption{\textsc{Ground truth}}
    \end{subfigure}%
    \hfill
    \begin{subfigure}{.24\textwidth}
      \centering
      \includegraphics[width=\textwidth]{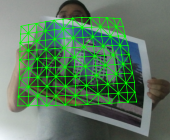}
      \caption{\textsc{GP}}
    \end{subfigure}%
    \hfill
    \begin{subfigure}{.24\textwidth}
      \centering
      \includegraphics[width=\textwidth]{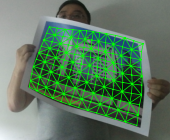}
      \caption{\textsc{Lap}}
    \end{subfigure}%
    \hfill
    \begin{subfigure}{.24\textwidth}
      \centering
      \includegraphics[width=\textwidth]{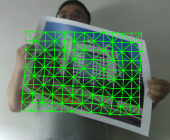}
      \caption{\textsc{Dense}}
    \end{subfigure}%
    \hfill
    \begin{subfigure}{.24\textwidth}
      \centering
      \includegraphics[width=\textwidth]{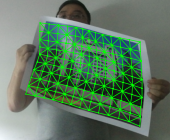}
      \caption{\textsc{Graph}}
    \end{subfigure}%
    \hfill
    \begin{subfigure}{.24\textwidth}
      \centering
      \includegraphics[width=\textwidth]{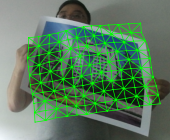}
      \caption{\textsc{Texless}}
    \end{subfigure}%
    \hfill
    \begin{subfigure}{.24\textwidth}
      \centering
      \includegraphics[width=\textwidth]{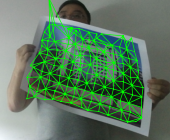}
      \caption{\textsc{Classical}}
    \end{subfigure}%
    \hfill
    \begin{subfigure}{.24\textwidth}
      \centering
      \includegraphics[width=\textwidth]{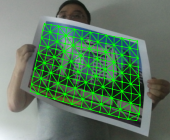}
      \caption{\textsc{Ours}}
    \end{subfigure}%
    \caption{\textbf{Qualitative results}. Comparison of the reconstructions obtained by different methods on the frame 85 of the campus sequence of \gls*{desurt} \cite{surface_track_graph_matching}.}
    \label{fig:qualitative}
\end{figure*}

\subsection{Baselines}

We use the following methods to compare the performance of the proposed technique. Unless specified, we use the publicly available code implemented by the original authors without modifying the algorithms or the hyper-parameters.
\begin{itemize}
	\item \textbf{GP: }Unconstrained \gls*{gplvm} \cite{constrained_lvm}. We exclude the constrained GP-LVM introduced in the same paper from the comparisons as it consistently underperformed the unconstrained version.
	\item \textbf{Lap: }Method based on Laplacian meshes \cite{lap_code_2,laplacian_mesh,lap_code_1}.
	\item \textbf{Dense: }Dense image registration algorithm by Ngo \etal \cite{dense_image_registration}.
	\item \textbf{Graph: }Deformable surface tracking using graph matching \cite{surface_track_graph_matching}.
	\item \textbf{TexLess: }The model from Bednar{\'{\i}}k \etal \cite{texture-less} trained from scratch with ground-truth mesh vertices for each dataset and mesh resolution.
	\item \textbf{Classical: }Classical method outlined in \cref{sec:classical}.
	\item \textbf{Ours: }Proposed method described in \cref{sec:our_method}.
\end{itemize}

\subsection{Implementation details}
\label{sec:implementation}

\textsc{Classical} requires solving a very sparse linear system, \ie, \cref{eq:linear_system_relaxed}, multiple times to approximate \cref{eq:hard_problem}. In practice, such a linear system is solved in the least-squares sense using the SciPy \cite{scipy} routine limited to 100 iterations multiple times until the approximation is good enough as understood by the same criterion of Coltraro \etal \cite{inextensible}.

All the surfaces of the datasets considered in this work are rectangular and represented using a mesh with equispaced vertices. For this reason, we can define $P_{\cV}$ with the coordinates given by a grid on the unit square using the width and height of the mesh. Concretely, we follow the convention to assign those values using the ordering of the vertices as seen on the template image (see \cref{fig:notation} for an illustration of the parametrization on the $RG$ color space).

The matches $M^{(t)}$ can be obtained with several methods such as SIFT \cite{sift}, SURF \cite{surf}, Ferns \cite{ferns}, or soft graph matching \cite{surface_track_graph_matching} and then applying an outlier rejection mechanism \cite{outlier_rejection_deformable}. Obtaining the matches with a dedicated model and then performing reconstruction results in correspondences robust to occlusions, especially for well-textured surfaces \cite{dense_image_registration}.

We obtain correspondences with the registration algorithm used in the baseline \textsc{Lap}, based on matching SIFT \cite{sift} features and then applying an outlier rejection mechanism. The \textsc{Lap} algorithm did not terminate and hence could not provide matchings for all the frames indicated with an asterisk ($\ast$) in \cref{tab:quantitative}. In this case, we used synthetic matches for simplicity by sampling a random point inside each facet of the mesh. To set the weight for each regularization term in \eqref{eq:loss}, we perform a grid search on the evaluation loss testing the values
\begin{align}
	\lambda_{\text{metric}},\lambda_{\text{time}}\in \{0, 10^{-2}, 10^{-1},\dots, 10^2\}
\end{align}
on the first sequence for each dataset. We use $(\lambda_{\text{metric}},\lambda_{\text{time}})=(0.01,0.001)$ on \gls*{desurt} and $(\lambda_{\text{metric}},\lambda_{\text{time}})=(100,100)$ on \gls*{tds}. We can justify the high regularization values on \gls*{tds} because the \gls*{tds} sequences show a cloth pinned to a fixed bar along a given edge or corners \cite{texture-less}, so the position of at least part of the cloth was quite stable, which is exploited by both the temporal and metric constraints. Finally, although we used synthetic matches in this case, the best results obtained by our method were achieved when the relative contribution of the projection loss was the lowest.

We represent the surface parametrization with a multi-layer perceptron with three hidden layers having 128, 256, and 128 units and implemented using the PyTorch \cite{pytorch} framework. We obtain the parameters of this model by minimizing \cref{eq:loss} using the ADAM optimizer \cite{adam}.

\subsection{Evaluation}

To evaluate the accuracy of the proposed model, we compute the Euclidean distances from vertex to vertex. In particular, \cref{tab:quantitative} reports the mean of such distances, \ie,
\begin{align}
	\frac{1}{T}\sum_{t\in[T]}\left[ \frac{1}{N}\sum_{n\in[N]} \norm{\hat{\vv}_n^{(t)}-\vv_n^{(t)}}_2 \right]\,,
	\label{eq:eval}
\end{align}
the quantity reported in several 3D reconstruction works \cite{dense_image_registration,pumarola2018geometry,constrained_lvm,surface_track_graph_matching}. Given a parametrization $\phi_{\theta^{(t)}}$ obtained with the proposed method, one can compute $\hat{\vv}_n^{(t)}$ by evaluating the parametrization at the point in $P_{\cV}$ corresponding to the $n-$th vertex.

It is worth noting that the best performance for all sequences is attained by methods relying on feature matches. In particular, \textsc{Lap} achieves the best performance for some sequences but fails on others, which is consistent with the results in Kairanda \etal \cite{phi_sft}. The superiority of algorithms taking matches as input contrasts with the current trend of directly predicting surfaces directly from images. However, relying on an external registration algorithm is a double-edged sword and becomes the main limitation of the proposed method. The reason is that matching algorithms introduce noise and fail with repetitive or poorly textured surfaces (\eg, \gls*{tds} dataset).

An alternative is to use dense approaches like \textsc{Dense}, which does not extract features but instead maximizes a similarity measure to perform registration \cite{dense_image_registration,temporal_coherence}. These approaches usually need consistent illumination and suffer from brightness changes, occlusions, and motion blur \cite{surface_track_graph_matching}.

Data-based approaches like \textsc{Texless} do not require matches but typically work only with the surface seen during training and require fine-tuning to different templates. Fuentes-Jimenez \etal \cite{texture_generic_deepsft} attempted training texture-generic neural networks, but their results are still less accurate than the ones obtained with texture-specific methods. \cref{tab:quantitative} showcases that data-based approaches did not perform strictly better than other methods for any tested sequences.

\cref{tab:additional_quantitative} reports the average time required by each method to process one image and statistics about the performance across all the sequences reported in \cref{tab:quantitative}. On the one hand, as expected, learning-based methods attain the lowest inference time as they only need to evaluate a function. Optimization-based methods require finding the best surface parameters given an image, which requires performing several function evaluations and parameter updates. The proposed method is optimization-based but has significantly lower computational overhead than its competitors. On the other hand, the proposed approach attains the best average performance across both tested datasets. Moreover, the standard deviation of mean tracking errors for different sequences is among the lowest, which shows that our method is generally applicable and robust.

In \cref{fig:qualitative}, we show a qualitative evaluation of the obtained results. We depict the projection of the reconstructed mesh on top of the input image for each tested method and the ground truth. Note that, as mentioned above, a perfect vertex projection does not guarantee that the reconstruction is correct due to the depth ambiguity. Therefore, one must consider both qualitative and quantitative results, the former considering how well the projected mesh matches the image and the latter considering the estimation error in 3D.

While \textsc{Classical} achieves one of the lowest mean tracking errors, some recovered surfaces are irregular. \textsc{Dense} attains the second-best reconstruction performance on the campus sequence despite failing to recover the surface in some individual frames. This approach does not rely on an external feature-matching algorithm and instead uses dense matching of image features in the optimized cost function. Although this is beneficial for poorly textured surfaces, it may lead to suboptimal re-projection constraints for identifiable textures. However, the quantitative metric being one of the best, shows that it better resolves the depth ambiguities than other methods.

The \textsc{Lap} and \textsc{Graph} methods also yield one of the best quantitative results on the campus sequence, this time reflected in qualitatively faithful surface reconstructions. Nonetheless, we can see mismatches with the surface contour \wrt to the \textsc{Ground truth}. \textsc{Ours} attains the best quantitative performance in \cref{tab:quantitative}, shows good qualitative results, and is better at recovering the surface contours than the alternatives.

\section{Conclusions}
\label{sec:conclusions}

This work tackles the inherently ill-posed problem of reconstructing deformable surfaces from monocular images. The proposed method assumes that the Riemannian metric of the manipulated surface is approximately constant or equivalently that the transformation from the template object is an isometry. Metric preservation consists of a mild hypothesis for a wide variety of surfaces since many materials do not perceptibly shrink or stretch when they suffer deformations \cite{reconstruct_sharply_folding}.

The Riemannian metric preservation constraint proposed by Coltraro \etal \cite{inextensible} for cloth simulation can easily be incorporated into the \gls*{sft} problem, which led to the approach denoted as \textsc{Classical}. The results obtained with this approach achieve one of the best performances for the \gls*{tds} sequences (see \cref{tab:quantitative}). Despite the notable results, which back up the metric preservation assumption, this approach scales poorly with the number of vertices and diverges for some sequences (see \eg, the Stone sequence in \cref{tab:quantitative}). For this reason, instead of naively incorporating the constraints and optimizing the vertex positions, we propose to use explicit neural surfaces.

Parametric surfaces learned by neural networks pose an attractive framework to represent surfaces with arbitrary precision, an observation formalized in \cref{prop:rep_power}. Having a continuous surface, we avoid discretization problems when estimating the surface parameters and can generate meshes with different levels of detail. The learned surface parametrization allows for the analytical computation of differential geometric quantities. In this work, we used the well-known isometry constraint, but we could easily modify the proposed framework to enforce other constraints, such as constant surface area as done by Salzmann \etal \cite{convex_optimization}.

Another advantage of using neural networks is that we can apply non-convex constraints involving partial derivatives and, therefore, not rely on the relaxations proposed by Coltraro \etal \cite{inextensible} needed in a classical optimization framework. Contrasting to the previous methods for \gls*{sft} using neural networks, our approach does not require offline training. Therefore, it lifts the requirement of a dataset having enough samples to represent the dynamics and appearance of an object, thus overcoming the problems of the data-based approaches described in \cref{sec:databased}. Among others, the consequences are that we can apply the proposed method to any sequence without modification and that it does not require a dataset to infer the manifold of plausible deformations from data. The solution to avoid training is to use an iterative optimization process for each input, but such optimization takes orders of magnitude less time than the alternative methods considered in this work.

\section*{Acknowledgments}

This work is part of the project CLOTHILDE (``CLOTH manIpulation Learning from DEmonstrations") which has received funding from the European Research Council (ERC) under the European Union’s Horizon 2020 research and innovation program (Advanced Grant agreement No. 741930). O.B. thanks Franco Coltraro and Xavier Gràcia for their insights on differential geometry, and the European Laboratory for Learning and Intelligent Systems (ELLIS) PhD program for support.

\bibliographystyle{ieee_fullname}
\bibliography{references}

\end{document}